\theoremstyle{plain}
\newtheorem{theorem}{Theorem}[section]
\newtheorem{corollary}[theorem]{Corollary}
\theoremstyle{definition}
\newtheorem{definition}[theorem]{Definition}
\theoremstyle{remark}
\newtheorem{remark}[theorem]{Remark}
\newcommand{\bm}[1]{\mathbf{#1}}
\icmltitlerunning{AWP: Activation-Aware Weight Pruning and Quantization with Projected Gradient Descent}
\begin{document}

\twocolumn[
\icmltitle{AWP: Activation-Aware Weight Pruning and Quantization\\ with Projected Gradient Descent}

\icmlsetsymbol{equal}{*}

\begin{icmlauthorlist}
\icmlauthor{Jing Liu}{merl}
\icmlauthor{Toshiaki Koike-Akino}{merl}
\icmlauthor{Ye Wang}{merl}
\icmlauthor{Hassan Mansour}{merl}
\icmlauthor{Matthew Brand}{merl}
\end{icmlauthorlist}

\icmlaffiliation{merl}{Mitsubishi Electric Research Laboratories (MERL), 201 Broadway, Cambridge, MA 02139, USA.}

\icmlcorrespondingauthor{Jing Liu}{jiliu@merl.com}

\icmlkeywords{Machine Learning, Mixture of Experts, Dynamic Network, Activation-Aware Pruning, LLM}

\vskip 0.3in
]
\printAffiliationsAndNotice{}  %

\begin{abstract}

To address the enormous size of Large Language Models (LLMs), model compression methods, such as quantization and pruning, are often deployed, especially on edge devices. In this work, we focus on layer-wise post-training quantization and pruning. Drawing connections between activation-aware weight pruning and sparse approximation problems, and motivated by the success of Iterative Hard Thresholding (IHT), we propose a unified method for \textbf{A}ctivation-aware \textbf{W}eight pruning and quantization via \textbf{P}rojected gradient descent (AWP). Our experiments demonstrate that AWP outperforms state-of-the-art LLM pruning and quantization methods. Theoretical convergence guarantees of the proposed method for pruning are also provided.
\end{abstract}

\section{Introduction }

Large transformer-based models have demonstrated superior performance in many tasks, including natural language processing and computer vision. However, their enormous size poses significant challenges for efficient deployment especially on the edge devices. Quantization and pruning are promising approaches for model compression. For example, it was found\footnote{{https://github.com/ggml-org/llama.cpp/pull/1684}} that the 4-bit quantized Llama2-13B model achieves better perplexity, smaller size, and faster inference speed than the unquantized Llama2-7B model.

In this work, we consider Post-Training Quantization (PTQ) and pruning/sparsification. This is in contrast to Quantization-Aware Training (QAT) and sparse network training of large models which are still resource intensive, and many applications only consider leveraging pre-trained foundation models instead of training a model from scratch. In the post-training compression setup~\cite{nagel2020up,DBLP:journals/corr/abs-2102-05426,pmlr-v139-hubara21a,LIANG2021370}, a trained but uncompressed model is given, together with a small amount of calibration data, with the aim to produce an accurate compressed model without retraining. As the foundation models are very large, which may exceed the hardware memory limit during compression, recent works, such as~\citet{pmlr-v119-wang20c,nagel2020up,pmlr-v139-hubara21a}, further break the compression task into layer-wise sub-problems, identifying a compressed weight approximation for each layer, given a sub-sample of the layer’s inputs and outputs based on calibration data. This approach is the focus of our work.

We introduce a new compression framework based on projected gradient descent (PGD), inspired by compressive sensing and sparse approximation methods.
The contributions of our paper are listed below:
\vspace{-6pt}
\begin{itemize}
  \setlength{\itemsep}{2pt}
  \setlength{\parskip}{0pt}
  \setlength{\parsep}{0pt}
  
\item We pose the LLM compression problem as a sparse approximation problem.

\item We show that PGD is a viable solution to unify pruning and quantization problems without requiring computationally-intensive operations such as SVD.

\item The proposed method outperforms state-of-the-art LLM compression methods on several benchmarks.

\item We provide theoretical convergence guarantees for the proposed method for pruning (details can be found in Appendix).
\end{itemize}

\section{Related Work and Background}

For the post-training quantization and pruning, activation-aware methods, e.g.,~\citet{lin2024awq, frantar2022gptq, wang2024q,frantar2023sparsegpt,sun2023wanda,10.5555/3600270.3600593,ZHANG2025101778}, demonstrate superior performance as they consider the input activation statistics to guide the quantization and pruning. For example, the traditional magnitude based weight pruning essentially solves a sparse approximation problem, without considering input activations:
\begin{equation}
\min_{\mathbf{W}_{\text{sparse}} \in \mathcal{C}_{\text{sparse}}}
\big [ \mathcal{L}(\mathbf{W}_\mathrm{sparse})
:= \|\mathbf{W} - \mathbf{W}_{\text{sparse}} \|_\mathrm{F}^2 \big ],
\end{equation}
where $\mathbf{W} \in \mathbb{R}^{d_\text{out} \times d_\text{in}}$ is the original uncompressed model weight, $\mathbf{W}_{\text{sparse}} \in \mathbb{R}^{d_\text{out} \times d_\text{in}}$ is the pruned/sparsified weight, and $\mathcal{C}_{\text{sparse}}$ is a constraint set, which could be the ratio of zero elements in $\mathbf{W}_{\text{sparse}}$ and/or its sparsity patterns.

For transformer-based models,
the linear layer takes in input activations $\mathbf{X} \in \mathbb{R}^{d_\text{in} \times n}$, where $n$ is a total token length given calibration data.
In contrast, activation-aware pruning changes the minimization objective to
\begin{align} \label{eq:activation_aware_X}
\mathcal{L}'(\mathbf{W}_\mathrm{sparse})
&:= \|\mathbf{W}\mathbf{X}  - \mathbf{W}_{\text{sparse}} \mathbf{X} \|_\mathrm{F}^2 \\
&\ =
\|\mathbf{W}\mathbf{C}^{\frac{1}{2}}  - \mathbf{W}_{\text{sparse}} \mathbf{C}^{\frac{1}{2}} \|_\mathrm{F}^2 ,\label{eq:activation_aware_C}
\end{align}
where $\bm C=\bm {X}\bm {X}^\top\in\mathbb{R}^{d_\text{in}\times d_\text{in}}$ is the auto-correlation of the input activation, and  $\mathbf{C}^{\frac{1}{2}}$ is the matrix square root of $\bm{C}$.

However, there is no closed-form solution to the above problem and several heuristic methods have been proposed.
Wanda~\cite{sun2023wanda} computes the $\ell_2$-norm of each row of $\mathbf{X}$, i.e., $\| \mathbf{X}[i,:] \|_2$, for $i=1,\ldots,d_\text{in}$,  to scale the corresponding column of $\bm W$, and then performs magnitude based pruning on this scaled matrix to obtain the pruning mask for the weight $\bm W$. It can be viewed as approximating $\mathbf{C}^{\frac{1}{2}}$ only by its diagonal in~\cref{eq:activation_aware_C}. Therefore, the cross-correlation information of the input activations $\bm X$ between each dimension is completely discarded.

Similar to Wanda, in the LLM quantization literature, the state-of-the-art Activation-aware Weight Quantization (AWQ) method~\cite{lin2024awq} computes the scaled $\ell_1$-norm of each row of $\mathbf{X}$, i.e., $\| \mathbf{X}[i,:] \|_1/n$, for $i=1,\ldots,d_\text{in}$, to scale the corresponding column of $\bm W$, and performs quantization on the scaled matrix. 

We also note that Wanda empirically found that, given a target pruning ratio $p$, better performance is achieved with a semi-structured pruning, i.e., by specifically requiring uniform pruning ratio $p$ for each row of $\mathbf{W}_{\text{sparse}}$, instead of restricting sparsity at only the whole matrix level.

The Optimal Brain Compression (OBC) method~\cite{10.5555/3600270.3600593} simplifies the original Optimal Brain Surgeon (OBS)~\cite{lecun1989optimal, hassibi1993optimal} by breaking the compression task into layer-wise sub-problems. For each layer, they use a greedy solver that sequentially prunes (or quantizes) weights. To prune each row of $\bm W$, they iteratively zero out the entry that results in the smallest incremental increase to approximation loss.
However, their follow-up work~\cite{frantar2023sparsegpt} mentions that this method is hard to scale to models with billions of parameters, and further propose several approximations (e.g., prune the weights from left to right and only recalculate approximation loss impact for the weights to the right) for speedup, leading to the SparseGPT method~\cite{frantar2023sparsegpt} for pruning, and the GPTQ~\cite{frantar2022gptq} method for quantization. Similarly, GPFQ~\cite{10.5555/3546258.3546414,doi:10.1137/22M1511709} uses a greedy path-following mechanism to quantize and/or prune weights from left to right, but has theoretical guarantees.

We refer interested readers to~\citet{zhu-etal-2024-survey-model,survey_quantizatoin_book,Nagel2021AWP,kim2023full,wan2023efficient} for a comprehensive overview of weight quantization and pruning methods. Another line of work on structured pruning, e.g.,~\citet{ma2023llm,150fac892d974a75910ecbaf36c65cd0}, removes entire structured components of a network, but usually involves post-training or fine-tuning to recover the performance.

\section{Motivation and Method}
We further decompose \eqref{eq:activation_aware_C} as
\begin{align}\label{eq:row-wise}
\mathcal{L}'(\mathbf{W}_\mathrm{sparse})=& \sum_{i=1}^{d_\text{out}}\|\mathbf{W}[i,:]\mathbf{C}^{\frac{1}{2}}  - \mathbf{W}_{\text{sparse}}[i,:] \mathbf{C}^{\frac{1}{2}} \|_2^2 .
\end{align}
Interestingly, when optimizing under the constraint
\begin{align} \label{eq:row_sparse_C}
\mathcal{C}_{\text{row}} := \big \{ \mathbf \Theta : \forall i \in \{1,\ldots,d_\text{out}\}, \  \|\mathbf \Theta[i,:]\|_0 \leq k \big \},
\end{align}
the problem for each term of \eqref{eq:row-wise} becomes exactly a well-studied sparse approximation problem\footnote{Since $\mathbf{C}^{\frac{1}{2}} $ is a square matrix, this is also a sparse linear regression problem.} of the following general form:
\begin{align}\label{eq:compressive_sensing}
\min_{\boldsymbol \theta} \big [ f(\boldsymbol \theta) &:= \|\mathbf{y}  - \mathbf{A} \boldsymbol \theta \|_2^2 \big ], \\
\text{s.t.} \ \|\boldsymbol{\theta}\|_0 &\leq k := (1-p) \cdot d_\text{in}, \nonumber
\end{align}
where $\mathbf{y}=(\mathbf{W}[i,:]\mathbf{C}^{\frac{1}{2}})^\top$, $\mathbf{A}=[\mathbf{C}^{\frac{1}{2}}]^\top=\mathbf{C}^{\frac{1}{2}} $, and $\boldsymbol \theta$ is the corresponding $\mathbf{W}_{\text{sparse}}[i,:]^\top $ which has $p \cdot d_\text{in}$ zeros that we want to find. $\|\boldsymbol{\theta}\|_0$ is the $\ell_0$ pseudo norm of $\boldsymbol{\theta}$ that counts the total number of nonzero elements in $\boldsymbol{\theta}$.

Many existing sparse approximation and compressive sensing algorithms can be used to solve this problem, such as Matching Pursuit~\cite{1993ITSP...41.3397M}, Orthogonal Matching Pursuit (OMP)~\cite{5895106}, Iterative Hard Thresholding (IHT)~\cite{blumensath2009iterative}, CoSaMP~\cite{tropp2008cosamp}, Basis Pursuit~\cite{chen2001atomic}, Sparse Bayesian Learning methods~\cite{tipping2001sparse,wipf2004sparse}. 
In fact, the OBC method can be viewed as the OMP method in reverse order. 
However, it is well-known in Compressive Sensing that such methods are too greedy and often outperformed by IHT and $\ell_1$ type methods.

We adopt the IHT method, since it is efficient to run and also has \textit{recovery guarantees}~\cite{blumensath2009iterative}. IHT is a special form of the projected gradient descent (PGD) method. Each iteration of PGD involves calculating the gradient\footnote{Using the fact that $\mathbf{y}=(\mathbf{W}[i,:]\mathbf{C}^{\frac{1}{2}})^\top$ can simplify the gradient computation without doing SVD of $\mathbf{C}$, as explained later.} of $f(\boldsymbol \theta)$, i.e., $\nabla f(\boldsymbol \theta^{(t)})$, and projecting the updated weights $(\boldsymbol \theta^{(t)}-\eta \nabla f(\boldsymbol \theta^{(t)}))$ onto the constraint set.
For IHT, the projection onto the constraint $\|\boldsymbol \theta\|_0 \leq k$ is simply hard-thresholding, i.e., keeping the $k$ largest-magnitude elements of ($\boldsymbol \theta^{(t)}-\eta \nabla f(\boldsymbol \theta^{(t)})$, and setting the remaining elements to $0$.
Actually all the terms in \eqref{eq:row-wise} can be solved independently and in parallel. This can also be viewed as minimizing the following objective, with the constraint of \eqref{eq:row_sparse_C} that each row of $\hat{\mathbf{W}}$ is $k$-sparse:
\begin{align}\label{eq:objective_activation_aware}
\min_{\hat{\mathbf{W}} \in \mathcal{C}_{\text{row}}}
\big [ f_1(\hat{\mathbf{W}}) := \|\mathbf{W}\mathbf{C}^{\frac{1}{2}}  - \hat{\mathbf{W}}\mathbf{C}^{\frac{1}{2}} \|_\mathrm{F}^2 \big ].
\end{align}
For this general objective $f_1$ in \eqref{eq:objective_activation_aware}, its gradient w.r.t. $\hat{\mathbf{W}}$ is:
\begin{align}
\nabla f_1(\hat{\mathbf{W}})=&-2  (\mathbf{W}\mathbf{C}^{\frac{1}{2}}  - \hat{\mathbf{W}}\mathbf{C}^{\frac{1}{2}})(\mathbf{C}^{\frac{1}{2}})^\top \\
= & -2  (\mathbf{W}  - \hat{\mathbf{W}})\mathbf{C}.
\label{eq:gradient}
\end{align}
Fortunately, even though the objective in \cref{eq:objective_activation_aware} has $\mathbf{C}^{\frac{1}{2}}$, the actual gradient calculation in \eqref{eq:gradient} only has $\bm C$, i.e., we can avoid calculating $\mathbf{C}^{\frac{1}{2}}$ and its expensive SVD computation.

\begin{algorithm}[t]
\small
   \caption{Activation-Aware Projected Gradient Descent}
   \label{alg}
\begin{algorithmic}
   \STATE {\bfseries Input:} original weight  $\bm W\in \mathbb{R}^{d_\text{out} \times d_\text{in}}$, input activation covariance $\mathbf{C}=\frac{1}{n}\bm X \bm X^\top$, constraints $\mathcal{C}$, step size $\eta$
   \STATE {\bfseries Initialize: $\mathbf{\Theta}^{(0)} \in \mathcal{C}$}
   \STATE {\bfseries Repeat}\\
    \quad $\mathbf{Z}^{(t)}=\mathbf{\Theta}^{(t)} + \eta  (\bm W-\mathbf{\Theta}^{(t)} )\bm C$;\\
    \quad $\mathbf{\Theta}^{(t+1)}=\mathsf{Proj}_{\mathcal{C}}(\mathbf{Z}^{(t)})$;
    \STATE {\bfseries until} a stopping criterion is met
    \STATE {\bfseries Output} compressed weight $\mathbf{\Theta}$
\end{algorithmic}

\end{algorithm}
%\vskip 0.2in
The overall activation-aware PGD method for compressing the weight is described in \cref{alg}, which we refer to as \textbf{A}ctivation-aware \textbf{W}eight pruning and quantization via \textbf{P}GD (AWP).
For semi-structured pruning, i.e., where each row is $k$-sparse, with $\mathcal{C}_{\text{row}}$ as given by \eqref{eq:row_sparse_C}, $\mathsf{Proj}_{\mathcal{C}_{\text{row}}}(\mathbf{Z})$ simply keeps the $k$ largest-magnitude elements in each row of $\mathbf{Z}$ and sets the remaining elements to 0.
While for quantization, e.g., INT4 quantization, $\mathsf{Proj}_{\mathcal{C}_{\text{INT4}}} (\mathbf{Z})$ would quantize $\mathbf{Z}$ into INT4.

Furthermore, \cref{alg} allows joint pruning and quantization, where the constraint set $\mathcal C$ becomes the intersection of $\mathcal{C}_{\text{sparse}}$ and ${\mathcal{C}_{\text{INT4}}} $. One can either prune the quantized version of $\mathbf{Z}$, or first prune $\mathbf{Z}$ and obtain the corresponding sparsity mask, then quantize the pruned version and finally apply the sparsity mask.   

Note that the main computational cost of \cref{alg} is the gradient descent, which involves multiplication between $ (\bm W-\mathbf{\Theta}^{(t)} )$ and $\bm C$, incurring $O(d_\text{out} \times d_\text{in}^2)$. This is computationally more efficient than inverting $\bm X \bm X^\top$ required in OBC, SparseGPT, GPTQ, etc., and can be run in parallel on the GPU.

\section{Experiments}
We compare the proposed AWP method with state-of-the-art methods on pruning, quantization, as well as joint pruning and quantization.
As in AWQ and Wanda, we evaluate the perplexity of the compressed model on the held-out WikiText-2~\cite{merity2016pointer} validation set.

\subsection{Pruning}
To compare with Wanda, SparseGPT, and Magnitude-based pruning, we follow the exact setup of the experiments in Wanda\footnote{https://github.com/locuslab/wanda}. More specifically, we tested on the Llama-2 7B and 13B models. The calibration data $\bm X$ is 128 sequences (each has 2048 tokens) sampled from the C4 training set~\cite{raffel2020exploring}. 
We test pruning ratios from $\{50\%, 60\%, 70\%, 80\%, 90\%\}$. Note that larger pruning ratio corresponds to higher compression rate. 
For AWP, the iteration stops when the Frobenius norm of the gradient normalized by the Frobenius norm of the original weight is less than 0.0001, or 200 iterations is reached. The step size $\eta$ is set as $2/ \|\mathbf{C}\|_\mathrm{F}$. As problem \eqref{eq:activation_aware_C} is nonconvex, we initialize  $\mathbf{\Theta}^{(0)}$ as the solution of Wanda, since a good initial point helps nonconvex optimization. 
 
\cref{table:prune7B} shows the perplexity of the pruned Llama-2-7B model for different methods and pruning ratios, and \cref{table:prune13B} shows the corresponding results for the Llama-2-13B model. The results of SparseGPT and Magnitude-based pruning are from~\citet{sun2023wanda}. As a reference, the perplexity of the original Llama-2-7B dense model (i.e., pruning ratio = 0\%) is 5.12 and the perplexity of the original Llama-2-13B dense model is 4.57.

\begin{table}[h]
\vskip -0.15in
\caption{Perplexity on WikiText2 of pruned Llama-2-7B model by different methods under different pruning ratios. }
\label{table:prune7B}
\begin{center}
\begin{small}
\begin{sc}
\begin{tabular}{lcccccr}
\toprule
&  $50\%$ & $60\%$ & $70\%$ & $80\%$& 90\%  \\
\midrule
Magnitude & $14.89$& $4e3$& - & NaN & - \\ 
SparseGPT & $6.51$ & $9.58$ & - & $1e2$ & - \\
Wanda & $6.48$ & $10.09$ & $70.04$ & $4e3$ & $1e4$ \\
\textbf{AWP} & $\textbf{6.42}$ & $\textbf{9.44}$ & $\textbf{22.10}$ & $\textbf{83.28}$ & $\mathbf{8e2}$
\\
\bottomrule
\end{tabular}
\end{sc}
\end{small}
\end{center}
\vskip -0.27in
\end{table}

\begin{table}[h]
\caption{Perplexity on WikiText2 of pruned Llama-2-13B model by different methods under different pruning ratios. }
\label{table:prune13B}
\vspace{-1pt}
\begin{center}
\begin{small}
\begin{sc}
\begin{tabular}{lcccccr}
\toprule
&  $50\%$ & $60\%$ & $70\%$ & $80\%$& 90\%  \\
\midrule
Magnitude & $6.37$& $11.23$& - & $5e4$ & - \\ 
SparseGPT & $5.63$ & $7.80$ & - & $1e2$ & - \\
Wanda & $5.59$ & $7.97$ & $43.06$ & $1e3$ & $2e4$ \\
\textbf{AWP} & $\textbf{5.54}$ & $\textbf{7.49}$ & $\textbf{16.57}$ & $\textbf{75.68}$ & $\mathbf{1e3}$
\\
\bottomrule
\end{tabular}
\end{sc}
\end{small}
\end{center}
\vskip -0.17in
\end{table}
\vspace{0mm}
We can see that for all methods, the perplexity gets worse when the pruning ratio increases. Nonetheless, the activation-aware methods, i.e., SparseGPT, Wanda, and proposed AWP perform significantly better than the magnitude based pruning, which is not activation-aware. The proposed AWP method has the best perplexity under all pruning ratios, especially when the pruning ratio is over 60\%.

\vskip -0.1in
\subsection{Quantization}
\vskip -0.03in
We compare with state-of-the-art layer-wise weight quantization methods AWQ\footnote{https://github.com/mit-han-lab/llm-awq} and GPTQ\footnote{https://github.com/AutoGPTQ/AutoGPTQ/tree/main}. We follow the experiment setups of AWQ, which focus on weight-only grouped quantization with a group size of 128. As in AWQ, we use a small calibration set from the Pile dataset~\cite{gao2020pile} in order not to overfit to a specific downstream domain. Besides INT4 and INT3 quantization, we additionally experiment with INT2 quantization.

For the proposed AWP method, the step size $\eta$ is set to $1.5/ \|\mathbf{C}\|_\mathrm{F}$, and we only run 10 iterations. We simply initialize $\mathbf{\Theta}^{(0)}$ to be the straightforward (i.e., not activation-aware) Round-To-Nearest (RTN) quantized version of $\bm W$. 

\cref{table:quantize8B} shows the perplexity of the INT4/INT3/INT2 quantized Llama-3.1-8B model by GPTQ, AWQ, and our proposed AWP method. AWP quantized INT4 and INT3 models have better perplexity than corresponding INT4 and INT3 models quantized by GPTQ and AWQ. For INT2 quantization, the resulting perplexities of all methods are very poor, although GPTQ has the lowest perplexity.

\begin{table}[h]
\vskip -0.19in
\caption{Perplexity on WikiText2 of quantized Llama-3.1-8B model by different methods. }
\label{table:quantize8B}
\begin{center}
\begin{small}
\begin{sc}
\begin{tabular}{lcccr}
\toprule
&  INT4 & INT3 & INT2   \\
\midrule
GPTQ & $9.95$ & 12.54 & $\mathbf {2e3}$\\
AWQ & $6.64$ & $8.14$ & $ {3e4}$ \\
\textbf{AWP} & $\textbf{6.55}$ & $\textbf{8.06}$ & ${1e6}$ 
\\
\bottomrule
\end{tabular}
\end{sc}
\end{small}
\end{center}
\vskip -0.2in
\end{table}

\subsection{Joint Pruning and Quantization}
Note that, as shown in \cref{table:quantize8B}, INT4 quantization by AWP achieves good perplexity (much better than INT2). Further, \cref{table:prune7B} and \cref{table:prune13B} show that pruning the original model up to 70\% has moderate perplexity degradation using the proposed AWP method. This motivates us to consider combining pruning with quantization to further compress the model, instead of solely pushing the quantization to extremely low-bits. 

Note that the proposed AWP method naturally allows joint pruning and quantization. We further compare it with sequential quantization and pruning using state-of-the-art methods AWQ+Wanda, as well as sequential pruning and quantization using Wanda+AWQ. We use INT4 quantization for all methods and test pruning ratios of 25\%, 50\%, and 75\%.

For AWP, we fix the step size $\eta=1.5/ \|\mathbf{C}\|_\mathrm{F}$ as in the quantization setting. Instead of directly compressing the model into the target bits and pruning ratio, we first gradually increase the pruning ratio without quantization for 50 iterations, then perform 50 joint pruning and INT4 quantization iterations. More specifically, in the first 25 iterations of purely pruning, we linearly increase the pruning ratio from 0\% to the target pruning ratio, then keep this pruning ratio unchanged in the remaining 75 iterations. In each iteration of our joint pruning and INT4 quantization, the projection is $\mathsf{Proj}_{\mathcal{C}_{\text{INT4}}}(\mathsf{Proj}_{\mathcal{C}_\text{row}} (\mathbf{Z}))$. At the end of iterations, the corresponding sparsity mask is applied to ensure that the final weight is both sparsified and quantized. 

We first test on the Llama-3.1-8B model. \cref{table:joint8B} shows the perplexity of pruned and INT4 quantized models by different methods under different pruning ratios. We can see that Wanda+AWQ (pruning first) is consistently better than AWQ+Wanda (quantization first) under different pruning ratios. The proposed AWP achieves the best performance under all pruning ratios, especially when the pruning ratio is high. 

An interesting finding is that, compared with INT2 quantization in \cref{table:quantize8B}, INT4 quantization with 75\% pruning ratio achieves significantly better perplexity. Note that 4 bits combined with 75\% pruning rate is roughly equivalent to 2 bits, as we need 1 bit to store the pruning mask. As recent efforts try to push the quantization to extremely low bits, e.g., 1.58 bit~\cite{ma2024era,wang20241}, our results show that combining quantization with pruning may achieve much better performance.

\begin{table}[t]
\vskip -0.15in
\caption{Perplexity on WikiText2 of pruned and INT4 quantized Llama-3.1-8B model by different methods. }
\label{table:joint8B}
\begin{center}
\begin{small}
\begin{sc}
\begin{tabular}{lcccr}
\toprule
  Pruning Ratio:& 25\%  & 50\% & 75\%   \\
\midrule
AWQ+Wanda & $6.93$ & $9.71$ & $ {3e2}$ \\
Wanda+AWQ & $\textbf{6.81}$ & $9.46$ & $ {2e2}$ \\
\textbf{AWP} & $\textbf{6.81}$ & $\textbf{9.32}$ & $\bm {1e2}$ 
\\
\bottomrule
\end{tabular}
\end{sc}
\end{small}
\end{center}
\vskip -0.25in
\end{table}

\begin{table}[t]
\caption{Perplexity on WikiText2 of pruned and INT4 quantized Llama-3.2-1B model by different methods. }
\label{table:joint1B}
\vspace{-2pt}
\begin{center}
\begin{small}
\begin{sc}
\begin{tabular}{lcccr}
\toprule
 Pruning Ratio:& 25\%  & 50\% & 75\%   \\
\midrule
AWQ+Wanda & $11.63$ & $23.95$ & $ {2e3}$ \\
Wanda+AWQ & $11.30$ & $21.90$ & $ {1e3}$ \\
\textbf{AWP} & $\textbf{11.20}$ & $\textbf{18.41}$ & $\bm {3e2}$ 
\\
\bottomrule
\end{tabular}
\end{sc}
\end{small}
\end{center}
\vskip -0.2in
\end{table}

\section{Conclusion and Discussion}

We have proposed a layer-wise activation-aware post-training quantization and pruning method based on PGD. 
We provided a new insight from the compressive sensing framework to compress large foundation models.
Empirical studies show that the proposed method outperforms the state-of-the-art LLM pruning and quantization methods. We hope our work will inspire more advanced LLM compression methods by leveraging cutting-edge compressive sensing techniques.

Our future directions include extending pruning to structured sparsity, e.g., NVIDIA's 2:4 sparsity~\cite{mishra2021accelerating}.
Another direction is to provide theoretical guarantees of the proposed method for quantization.

\bibliography{example_paper}
\bibliographystyle{icml2025}

\newpage
\appendix
\onecolumn

\section{Theoretical Justifications of AWP for Pruning}

\subsection{Convergence under Restricted Isometry Property (RIP)}
We first recall the theoretical guarantees of IHT algorithm~\cite{blumensath2009iterative}, which uses the iteration:
$${\boldsymbol \theta}^{(t+1)}=H_k({\boldsymbol \theta}^{(t)} + \bm A^{\top}(\bm y-\bm A{\boldsymbol \theta}^{(t)} ))$$
where $H_k(\cdot)$ is the hard-thresholding operator that keeps $k$ largest-magnitude elements of the input vector and sets the remaining elements to 0.

\begin{theorem}
~[\citet{blumensath2009iterative}] Given a noisy observation $\mathbf{y}  = \mathbf{A} \boldsymbol \theta_k +\bm e$, where $\boldsymbol \theta_k$ is $k$-sparse. If $\bm A$ has the restricted isometry property with $\beta_{3k}<1/8$, then, at iteration $t$, IHT will recover an approximation $\boldsymbol \theta^{(t)}$ satisfying
$$\| \boldsymbol \theta^{(t)}-\boldsymbol \theta_k \|_2 \leq \|\boldsymbol \theta_k\|_2/2^t + 4 \|\bm e\|_2.$$

Furthermore, after at most 
$t'= \lceil \log_2(\|\boldsymbol \theta_k \|_2 / \|\bm e\|_2)\rceil$ iterations,
$$\| \boldsymbol \theta^{(t')}-\boldsymbol \theta_k \|_2 \leq 5 \|\bm e\|_2.$$

\end{theorem}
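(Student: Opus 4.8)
The plan is to establish a one-step error recursion of the form $\|\boldsymbol\theta^{(t+1)} - \boldsymbol\theta_k\|_2 \le \rho\,\|\boldsymbol\theta^{(t)} - \boldsymbol\theta_k\|_2 + c\,\|\mathbf e\|_2$ with a contraction factor $\rho < 1/2$, and then unroll it as a geometric series. Writing $\mathbf r^{(t)} := \boldsymbol\theta^{(t)} - \boldsymbol\theta_k$ for the error and $\mathbf a^{(t)} := \boldsymbol\theta^{(t)} + \mathbf A^\top(\mathbf y - \mathbf A\boldsymbol\theta^{(t)})$ for the pre-thresholding gradient step, the first step is to exploit the defining property of $H_k$. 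Since $\boldsymbol\theta^{(t+1)} = H_k(\mathbf a^{(t)})$ is the best $k$-term approximation of $\mathbf a^{(t)}$, and since both $\boldsymbol\theta^{(t+1)}$ and the $k$-sparse $\boldsymbol\theta_k$ vanish outside the set $B := \operatorname{supp}(\boldsymbol\theta^{(t+1)}) \cup \operatorname{supp}(\boldsymbol\theta_k)$ (which has at most $2k$ entries), the terms off $B$ cancel and one obtains $\|\boldsymbol\theta^{(t+1)} - \mathbf a^{(t)}_B\|_2 \le \|\boldsymbol\theta_k - \mathbf a^{(t)}_B\|_2$. A triangle inequality then gives $\|\mathbf r^{(t+1)}\|_2 \le 2\,\|(\mathbf a^{(t)} - \boldsymbol\theta_k)_B\|_2$.

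Next I would substitute $\mathbf y = \mathbf A\boldsymbol\theta_k + \mathbf e$ to rewrite the residual as $\mathbf a^{(t)} - \boldsymbol\theta_k = (\mathbf I - \mathbf A^\top\mathbf A)\mathbf r^{(t)} + \mathbf A^\top\mathbf e$, and split its restriction to $B$ into a signal term and a noise term. The crux is the RIP lemma: for any vector $\mathbf u$ supported on a set $S$, one has $\|[(\mathbf I - \mathbf A^\top\mathbf A)\mathbf u]_B\|_2 \le \beta_{|B\cup S|}\,\|\mathbf u\|_2$, which follows from the operator-norm estimate $\|\mathbf A_B^\top\mathbf A_S - \mathbf I_{B\cap S}\| \le \beta_{|B\cup S|}$ supplied by the restricted isometry property. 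Taking $\mathbf u = \mathbf r^{(t)}$, whose support lies in $\operatorname{supp}(\boldsymbol\theta^{(t)}) \cup \operatorname{supp}(\boldsymbol\theta_k)$, the union $B\cup S$ has at most $3k$ entries, so the signal term is bounded by $\beta_{3k}\,\|\mathbf r^{(t)}\|_2$; the noise term is controlled by the upper RIP bound $\|\mathbf A_B^\top\mathbf e\|_2 \le \sqrt{1+\beta_{2k}}\,\|\mathbf e\|_2$. Combining yields $\|\mathbf r^{(t+1)}\|_2 \le 2\beta_{3k}\,\|\mathbf r^{(t)}\|_2 + 2\sqrt{1+\beta_{2k}}\,\|\mathbf e\|_2$, and the hypothesis $\beta_{3k} < 1/8$ forces the contraction factor $2\beta_{3k} < 1/4 < 1/2$.

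The final step is routine bookkeeping. Unrolling the recursion from the standard initialization $\boldsymbol\theta^{(0)} = \mathbf 0$ (so that $\|\mathbf r^{(0)}\|_2 = \|\boldsymbol\theta_k\|_2$, matching the $\|\boldsymbol\theta_k\|_2$ in the statement) and summing the geometric series gives $\|\mathbf r^{(t)}\|_2 \le (2\beta_{3k})^t\,\|\boldsymbol\theta_k\|_2 + \frac{2\sqrt{1+\beta_{2k}}}{1-2\beta_{3k}}\,\|\mathbf e\|_2$; using $2\beta_{3k} < 1/2$ bounds the first factor by $2^{-t}$, while checking that $\frac{2\sqrt{1+\beta_{2k}}}{1-2\beta_{3k}} \le 4$ under $\beta_{2k} \le \beta_{3k} < 1/8$ (the worst case gives $\frac{2\sqrt{9/8}}{3/4} \approx 2.83 \le 4$) recovers the stated bound $\|\boldsymbol\theta^{(t)} - \boldsymbol\theta_k\|_2 \le 2^{-t}\,\|\boldsymbol\theta_k\|_2 + 4\,\|\mathbf e\|_2$. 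For the second assertion, substituting $t' = \lceil \log_2(\|\boldsymbol\theta_k\|_2/\|\mathbf e\|_2)\rceil$ makes $2^{-t'}\,\|\boldsymbol\theta_k\|_2 \le \|\mathbf e\|_2$, leaving the residual $5\,\|\mathbf e\|_2$. The main obstacle is the RIP lemma bounding $\|[(\mathbf I - \mathbf A^\top\mathbf A)\mathbf u]_B\|_2$: one must track the support cardinalities carefully so the relevant isometry constant never exceeds order $3k$, since a looser accounting would invoke a higher-order RIP constant and break the $\beta_{3k} < 1/8$ threshold that drives the whole argument.
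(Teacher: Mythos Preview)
Your argument is correct and is precisely the standard Blumensath--Davies contraction proof: the best-$k$-approximation property of $H_k$ localizes the error to a set $B$ of size at most $2k$, the RIP off-diagonal lemma bounds $\|[(\mathbf I-\mathbf A^\top\mathbf A)\mathbf r^{(t)}]_B\|_2$ by $\beta_{3k}\|\mathbf r^{(t)}\|_2$ once the support of $\mathbf r^{(t)}$ is absorbed, and unrolling the recursion with the numerical check $\tfrac{2\sqrt{1+\beta_{2k}}}{1-2\beta_{3k}}\le 4$ gives both displayed bounds. Note, however, that the paper does not supply its own proof of this theorem at all: it is quoted verbatim from \citet{blumensath2009iterative} and used as a black box to derive Theorem~\ref{theorem:awq_row} and Corollary~\ref{corollary}, so there is nothing in the paper to compare your derivation against beyond the cited source, which your write-up faithfully reproduces.
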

The definition of $\beta_{3k}$ in RIP can be found in Equation (6) of \citet{blumensath2009iterative}.

Note that running AWP Algorithm 1 for pruning with $\mathcal{C}_{\text{row}}=\{\mathbf \Theta : \|\mathbf \Theta[i,:]\|_0 \leq k, i=1,...,d_\mathrm{out}\}$ and step size $\eta=1$ is equivalent to running IHT for each row of the weight matrix in parallel. Recall that in our semi-structured pruning (i.e., each row is targeted to be $k$-sparse), $\mathcal{C}_{\text{row}}=\{\mathbf \Theta : \|\mathbf \Theta[i,:]\|_0 \leq k, i=1,...,d_\mathrm{out}\}$ and $\mathsf{Proj}_{\mathcal{C}_{\text{row}}}(\mathbf{Z})$ simply keeps $k$ largest-magnitude elements in each row of $\mathbf{Z}$ and set remaining elements to 0. 

More specifically, recall that each term in \eqref{eq:row-wise} is:
 $$\|\mathbf{W}[i,:]\mathbf{C}^{\frac{1}{2}}  - \mathbf{W}_{\text{sparse}}[i,:] \mathbf{C}^{\frac{1}{2}} \|_2^2=\|\underbrace{(\mathbf{W}[i,:]\mathbf{C}^{\frac{1}{2}})^\top}_{\bm y}  - \underbrace{(\mathbf{C}^{\frac{1}{2}})^\top}_{\bm A} \underbrace{\mathbf{W}_{\text{sparse}}[i,:]^\top}_{\boldsymbol \theta}  \|_2^2. $$
We can view 
$\mathbf{y}=(\mathbf{W}[i,:]\mathbf{C}^{\frac{1}{2}})^\top$, $\mathbf{A}=(\mathbf{C}^{\frac{1}{2}})^\top$, and $\mathbf{W}_{\text{sparse}}[i,:]^\top$ corresponds to $\boldsymbol \theta$. Let us denote the \textbf{global optimal} $k$-sparse solution $\mathbf{W}_{\text{sparse}}[i,:]$ to the above problem as $\mathbf{W}^*_{\text{sparse}}[i,:]$, which corresponds to $\boldsymbol \theta_k$, then we have the corresponding optimal $k$-sparse approximation error $\bm e = (\mathbf{W}[i,:]\mathbf{C}^{\frac{1}{2}}  - \mathbf{W}_{\text{sparse}}^*[i,:] \mathbf{C}^{\frac{1}{2}})^\top$, which is treated as a noise term.

Therefore, we have following guarantee of AWP pruning for each row:

\begin{theorem}\label{theorem:awq_row}
If $\mathbf{C}^{\frac{1}{2}}$ has restricted isometry property with $\beta_{3k}<1/8$, Algorithm 1 with $\mathcal{C}_{\textrm{row}}=\{\mathbf \Theta : \|\mathbf \Theta[i,:]\|_0 \leq k, i=1,...,d_\textrm{out}\}$ and $\eta =1$ will recover an approximation $\mathbf{\Theta}[i,:]^{(t)}$ satisfying
$$\| \mathbf{\Theta}[i,:]^{(t)} -\mathbf{W}^*_{\text{sparse}}[i,:] \|_2 \leq \|\mathbf{W}^*_{\text{sparse}}[i,:]\|_2/2^t + 4 \|\mathbf{W}[i,:]\mathbf{C}^{\frac{1}{2}}  - \mathbf{W}_{\text{sparse}}^*[i,:] \mathbf{C}^{\frac{1}{2}}\|_2.$$

Furthermore, after at most 
$t'= \max_i \lceil \log_2(\|\mathbf{W}^*_{\text{sparse}}[i,:] \|_2 / \|(\mathbf{W}[i,:]\mathbf{C}^{\frac{1}{2}}  - \mathbf{W}_{\text{sparse}}^*[i,:] \mathbf{C}^{\frac{1}{2}})^\top\|_2)\rceil$ iterations,
$$\| \mathbf{\Theta}[i,:]^{(t')}-\mathbf{W}_{\text{sparse}}^*[i,:]  \|_2 \leq 5 \|\mathbf{W}[i,:]\mathbf{C}^{\frac{1}{2}}  - \mathbf{W}_{\text{sparse}}^*[i,:] \mathbf{C}^{\frac{1}{2}}\|_2.$$

\end{theorem}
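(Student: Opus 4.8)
The plan is to derive Theorem~\ref{theorem:awq_row} as a direct row-wise corollary of the IHT recovery guarantee recalled above, so that essentially no new analysis is needed once the \emph{equivalence} between Algorithm 1 (with $\eta=1$ and $\mathcal{C}_{\mathrm{row}}$) and $d_\mathrm{out}$ parallel runs of IHT is rigorously established. First I would fix a row index $i$, write $\boldsymbol\theta^{(t)} := \mathbf{\Theta}^{(t)}[i,:]^\top$, and transpose the Algorithm 1 gradient step restricted to that row. Using the symmetry of $\mathbf{C}^{\frac12}$ together with the identity $\mathbf{C}^{\frac12}\mathbf{C}^{\frac12}=\mathbf{C}$, the step $\mathbf{Z}^{(t)}[i,:]=\mathbf{\Theta}^{(t)}[i,:]+(\mathbf{W}[i,:]-\mathbf{\Theta}^{(t)}[i,:])\mathbf{C}$ transposes to
$$\mathbf{Z}^{(t)}[i,:]^\top=\boldsymbol\theta^{(t)}+\mathbf{C}\big(\mathbf{W}[i,:]^\top-\boldsymbol\theta^{(t)}\big)=\boldsymbol\theta^{(t)}+\mathbf{A}^\top\big(\mathbf{y}-\mathbf{A}\boldsymbol\theta^{(t)}\big),$$
with $\mathbf{A}=\mathbf{C}^{\frac12}$ and $\mathbf{y}=(\mathbf{W}[i,:]\mathbf{C}^{\frac12})^\top=\mathbf{C}^{\frac12}\mathbf{W}[i,:]^\top$. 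This is precisely the IHT gradient step, so the only remaining thing to match is the projection.

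Next I would check that $\mathsf{Proj}_{\mathcal{C}_{\mathrm{row}}}$ acts independently across rows and, on each row-vector, coincides with the hard-thresholding operator $H_k$ (both keep the $k$ largest-magnitude entries and zero the rest). Combined with the gradient-step identity above, this gives $\boldsymbol\theta^{(t+1)}=H_k\big(\boldsymbol\theta^{(t)}+\mathbf{A}^\top(\mathbf{y}-\mathbf{A}\boldsymbol\theta^{(t)})\big)$, i.e.\ row $i$ of Algorithm 1 is verbatim IHT with sensing matrix $\mathbf{A}=\mathbf{C}^{\frac12}$. I would then set $\boldsymbol\theta_k:=\mathbf{W}^*_{\mathrm{sparse}}[i,:]^\top$ (the global optimal $k$-sparse row) and define the induced residual $\mathbf{e}_i:=\mathbf{y}-\mathbf{A}\boldsymbol\theta_k=(\mathbf{W}[i,:]\mathbf{C}^{\frac12}-\mathbf{W}^*_{\mathrm{sparse}}[i,:]\mathbf{C}^{\frac12})^\top$. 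Since the IHT theorem holds for \emph{any} $k$-sparse $\boldsymbol\theta_k$ with $\mathbf{e}_i$ its induced residual, it applies to this optimal choice, and $\|\mathbf{e}_i\|_2$ equals the stated approximation-error norm because transposition preserves the $\ell_2$ norm.

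With the equivalence in hand I would invoke the IHT theorem directly. Crucially, \emph{all} rows share the same sensing matrix $\mathbf{A}=\mathbf{C}^{\frac12}$, so the single hypothesis $\beta_{3k}<1/8$ simultaneously governs every row, and applying the per-iteration bound $\|\boldsymbol\theta^{(t)}-\boldsymbol\theta_k\|_2\le \|\boldsymbol\theta_k\|_2/2^t+4\|\mathbf{e}_i\|_2$ row by row yields the first displayed inequality. For the second inequality I would let $t'_i:=\lceil\log_2(\|\mathbf{W}^*_{\mathrm{sparse}}[i,:]\|_2/\|\mathbf{e}_i\|_2)\rceil$ be the per-row iteration count and set $t'=\max_i t'_i$; since $\|\boldsymbol\theta_k\|_2/2^t+4\|\mathbf{e}_i\|_2$ is nonincreasing in $t$, running all rows in parallel for $t'\ge t'_i$ iterations still gives $\|\boldsymbol\theta_k\|_2/2^{t'}\le\|\mathbf{e}_i\|_2$ for every $i$, hence the uniform $5\|\mathbf{e}_i\|_2$ bound.

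The analysis itself is routine; the only real obstacle is the bookkeeping in the equivalence step, namely correctly handling the transposes and exploiting both the symmetry of $\mathbf{C}^{\frac12}$ and the collapse $\mathbf{C}^{\frac12}\mathbf{C}^{\frac12}=\mathbf{C}$, so that the $\mathbf{C}^{\frac12}$-based IHT step reproduces the $\mathbf{C}$-based Algorithm 1 step without ever forming $\mathbf{C}^{\frac12}$. A second mild subtlety is the \emph{uniform} iteration count across rows: because the rows are decoupled yet advanced in lockstep, one must argue that over-iterating a fast-converging row cannot break its guarantee, which follows from monotonicity of the IHT error bound.
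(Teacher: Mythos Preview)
Your proposal is correct and follows essentially the same route as the paper: the paper states the row-wise equivalence between Algorithm~1 (with $\eta=1$ and $\mathcal{C}_{\mathrm{row}}$) and parallel IHT runs, makes the identifications $\mathbf{A}=(\mathbf{C}^{1/2})^\top$, $\mathbf{y}=(\mathbf{W}[i,:]\mathbf{C}^{1/2})^\top$, $\boldsymbol\theta_k=\mathbf{W}^*_{\text{sparse}}[i,:]^\top$, $\mathbf{e}=(\mathbf{W}[i,:]\mathbf{C}^{1/2}-\mathbf{W}^*_{\text{sparse}}[i,:]\mathbf{C}^{1/2})^\top$, and then simply invokes the Blumensath--Davies IHT theorem. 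Your write-up is in fact more careful than the paper's, as you explicitly verify the gradient-step transpose calculation (using symmetry of $\mathbf{C}^{1/2}$ and the collapse $\mathbf{C}^{1/2}\mathbf{C}^{1/2}=\mathbf{C}$) and supply the monotonicity argument for the uniform iteration count $t'=\max_i t'_i$, neither of which the paper spells out.
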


Note that one can always scale the matrix $\mathbf{C}^{\frac{1}{2}}$ in the activation-aware objective function \eqref{eq:activation_aware_C}.

Combining the error bounds from all rows, we have following corollary:

\begin{corollary}\label{corollary}
If $\mathbf{C}^{\frac{1}{2}}$ has restricted isometry property with $\beta_{3k}<1/8$, running Algorithm 1 with $\mathcal{C}_{\text{row}}=\{\mathbf \Theta : \|\mathbf \Theta[i,:]\|_0 \leq k, i=1,...,d_\textrm{out}\}$ and $\eta =1$ after at most 
$t'= \max_i \lceil \log_2(\|\mathbf{W}_{\text{sparse}}[i,:]^* \|_2 / \|(\mathbf{W}[i,:]\mathbf{C}^{\frac{1}{2}}  - \mathbf{W}_{\text{sparse}}^*[i,:] \mathbf{C}^{\frac{1}{2}})^\top\|_2)\rceil$ iterations, we have 
\begin{equation}\label{eq:bound}
\| \mathbf{\Theta}^{(t')}-\mathbf{W}_{\text{sparse}}^* \|_\mathrm{F} \leq 5 \|\mathbf{W}\mathbf{C}^{\frac{1}{2}}  - \mathbf{W}_{\text{sparse}}^* \mathbf{C}^{\frac{1}{2}}\|_\mathrm{F}.    
\end{equation}

\end{corollary}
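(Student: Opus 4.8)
The plan is to reduce the matrix-level Frobenius bound to the per-row $\ell_2$ bounds already established in Theorem~\ref{theorem:awq_row}, exploiting the fact that Algorithm~1 with $\mathcal{C}_{\text{row}}$ decouples completely across rows. First I would record the observation underlying the whole corollary: since $\mathcal{C}_{\text{row}}$ imposes a separate $k$-sparsity budget on each row, and the gradient update $\mathbf{Z}^{(t)}=\mathbf{\Theta}^{(t)}+\eta(\mathbf{W}-\mathbf{\Theta}^{(t)})\mathbf{C}$ together with the row-wise hard-thresholding projection acts on each row independently, the $i$-th row of the iterate $\mathbf{\Theta}^{(t)}$ evolves exactly as the IHT iterate for the $i$-th sub-problem with $\mathbf{y}=(\mathbf{W}[i,:]\mathbf{C}^{\frac{1}{2}})^\top$ and $\mathbf{A}=(\mathbf{C}^{\frac{1}{2}})^\top$. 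Hence Theorem~\ref{theorem:awq_row} applies verbatim to every row, with $i$-th residual $\bm{e}_i:=(\mathbf{W}[i,:]\mathbf{C}^{\frac{1}{2}}-\mathbf{W}^*_{\text{sparse}}[i,:]\mathbf{C}^{\frac{1}{2}})^\top$.

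The second step is to verify that the single uniform iteration count $t'=\max_i\lceil\log_2(\|\mathbf{W}^*_{\text{sparse}}[i,:]\|_2/\|\bm{e}_i\|_2)\rceil$ is simultaneously valid for all rows. Writing $t'_i$ for the per-row threshold from Theorem~\ref{theorem:awq_row}, I would appeal to the first (geometric-decay) bound in that theorem: for any $t\ge t'_i$ one has $\|\mathbf{W}^*_{\text{sparse}}[i,:]\|_2/2^{t}\le\|\bm{e}_i\|_2$, so the bound $\|\mathbf{\Theta}[i,:]^{(t)}-\mathbf{W}^*_{\text{sparse}}[i,:]\|_2\le 5\|\bm{e}_i\|_2$ continues to hold. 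Because $t'\ge t'_i$ for every $i$ by construction, at iteration $t'$ the $5\|\bm{e}_i\|_2$ bound holds \emph{for all rows at once}.

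The third and final step is the aggregation. Using the identity $\|\mathbf{M}\|_\mathrm{F}^2=\sum_i\|\mathbf{M}[i,:]\|_2^2$ for both the left- and right-hand matrices, I would square the per-row bounds, sum over $i=1,\ldots,d_\text{out}$, pull the constant $25$ out of the sum, and recognize the summed right-hand side as $25\,\|\mathbf{W}\mathbf{C}^{\frac{1}{2}}-\mathbf{W}^*_{\text{sparse}}\mathbf{C}^{\frac{1}{2}}\|_\mathrm{F}^2$. Taking square roots then yields exactly \eqref{eq:bound}.

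I do not anticipate a genuine obstacle here, as the result is essentially a Pythagorean bookkeeping argument layered on top of Theorem~\ref{theorem:awq_row}. The only point requiring care is the second step: the $\max_i$ in the definition of $t'$ means some rows reach their target error strictly before iteration $t'$, so one must confirm that further IHT iterations do not degrade an already-achieved row bound — which the monotonically decreasing geometric term $\|\mathbf{W}^*_{\text{sparse}}[i,:]\|_2/2^t$ guarantees.
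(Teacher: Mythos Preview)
Your proposal is correct and follows essentially the same route as the paper: invoke the per-row bound from Theorem~\ref{theorem:awq_row}, square, sum over rows, and take the square root. Your second step, explicitly checking that the geometric-decay term keeps the per-row bound valid for all $t\ge t'_i$ (so the uniform $t'=\max_i t'_i$ works), is a nice point of rigor that the paper's own proof glosses over.
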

\begin{proof}
From \cref{theorem:awq_row}, we have
$$\| \mathbf{\Theta}[i,:]^{(t')}-\mathbf{W}_{\text{sparse}}^*[i,:]  \|_2 \leq 5 \|\mathbf{W}[i,:]\mathbf{C}^{\frac{1}{2}}  - \mathbf{W}_{\text{sparse}}^*[i,:] \mathbf{C}^{\frac{1}{2}}\|_2.$$
Squaring both sides, we have
$$\| \mathbf{\Theta}[i,:]^{(t')}-\mathbf{W}_{\text{sparse}}^*[i,:]  \|_2^2 \leq 25 \|\mathbf{W}[i,:]\mathbf{C}^{\frac{1}{2}}  - \mathbf{W}_{\text{sparse}}^*[i,:] \mathbf{C}^{\frac{1}{2}}\|_2^2.$$
Summing over all rows, we have
$$\sum_i^{d_\text{out}}\| \mathbf{\Theta}[i,:]^{(t')}-\mathbf{W}_{\text{sparse}}^*[i,:]  \|_2^2 \leq 25\sum_i^{d_\text{out}} \|\mathbf{W}[i,:]\mathbf{C}^{\frac{1}{2}}  - \mathbf{W}_{\text{sparse}}^*[i,:] \mathbf{C}^{\frac{1}{2}}\|_2^2.$$
So we have
$$\| \mathbf{\Theta}^{(t')}-\mathbf{W}_{\text{sparse}}^* \|_\mathrm{F}^2 \leq 25 \|\mathbf{W}\mathbf{C}^{\frac{1}{2}}  - \mathbf{W}_{\text{sparse}}^* \mathbf{C}^{\frac{1}{2}}\|_\mathrm{F}^2.$$
Finally, taking square root on both sides, we obtain~\cref{eq:bound}.
\end{proof}

\subsection{Convergence under Restricted Strong Convexity (RSC)
and Restricted Smoothness (RSM) Property}
On the other hand, instead of requiring the restricted isometry property (RIP) of $\mathbf{C}^{\frac{1}{2}}$, which is not easy to verify~\cite{wang2016average}, \citet{10.5555/2968826.2968903} provided the recovery guarantee of IHT method based on Restricted Strong Convexity (RSC) and Restricted Smoothness (RSM) properties defined below~\cite{10.5555/2968826.2968903,10.1093/imaiai/iaz027}:

\begin{definition}\label{def:RSC} (RSC Property) A differential function $f:\mathbb{R}^{d} \rightarrow \mathbb{R}$ satisfies restricted strong convexity with parameter $\alpha$ at sparsity level $k$, abbreviated as $(\alpha, k)$-RSC, if the following holds for all ${\boldsymbol \theta}_1, {\boldsymbol \theta}_2 \ s.t. \ \|{\boldsymbol \theta}_1\|_0 \leq k $ and $\|{\boldsymbol \theta}_2\|_0 \leq k$:
$$f({\boldsymbol \theta}_1)\geq f({\boldsymbol \theta}_2)+\langle\nabla_{\boldsymbol \theta} f(\boldsymbol \theta_2),\boldsymbol \theta_1-\boldsymbol \theta_2\rangle+\frac{\alpha}{2}\|\boldsymbol \theta_1-\boldsymbol \theta_2\|_2^2.$$
\end{definition}
\begin{definition}\label{def:RSM}  (RSM Property) A differential function $f:\mathbb{R}^{d} \rightarrow \mathbb{R}$ satisfies restricted strong smoothness with parameter $\beta$ at sparsity level $k$, abbreviated as $(\beta, k)$-RSM, if the following holds for all ${\boldsymbol \theta}_1, {\boldsymbol \theta}_2 \ s.t. \ \|{\boldsymbol \theta}_1\|_0 \leq k $ and $\|{\boldsymbol \theta}_2\|_0 \leq k$:
$$f({\boldsymbol \theta}_1)\leq f({\boldsymbol \theta}_2)+\langle\nabla_{\boldsymbol \theta} f(\boldsymbol \theta_2),\boldsymbol \theta_1-\boldsymbol \theta_2\rangle+\frac{\beta}{2}\|\boldsymbol \theta_1-\boldsymbol \theta_2\|_2^2.$$
\end{definition}
Based on the above two properties, ~\citet{10.1093/imaiai/iaz027} states that for an objective function $f$ satisfying $(\alpha, k)$-RSC and $(\beta, k)$-RSM, IHT with step size $\eta \propto 1/\beta$, i.e., ${\boldsymbol \theta}^{(t+1)}=H_k({\boldsymbol \theta}^{(t)} - \eta \nabla_{\boldsymbol \theta} f({\boldsymbol \theta}^{(t)} ))$, satisfies
$$f(\boldsymbol \theta^{(t)}) \leq \min_{\|\boldsymbol \theta\|_0 \leq k/(32\kappa^2)}\big [f(\boldsymbol \theta) +\big(1-\frac{1}{12\kappa}\big)^t \cdot \big (f(\boldsymbol \theta^{(0)})-f(\boldsymbol \theta)\big ) \big ], $$
where $\kappa=\beta / \alpha$, known as the condition number of Hessian of $f$. 
In other words, it shows linear convergence to the bound
$$\lim_{t \rightarrow \infty} f(\boldsymbol \theta^{(t)})\leq \min_{\|\boldsymbol \theta\|_0 \leq k/(32\kappa^2)} f(\boldsymbol \theta).$$

For our quadratic objective $f(\boldsymbol \theta) := \|\mathbf{y}  - \mathbf{A} \boldsymbol \theta \|_2^2 $ in \cref{eq:compressive_sensing}, we have exactly
\begin{align}
  f({\boldsymbol \theta}_1)&= f({\boldsymbol \theta}_2)+\langle\nabla_{\boldsymbol \theta} f(\boldsymbol \theta_2),\boldsymbol \theta_1-\boldsymbol \theta_2\rangle+(\boldsymbol \theta_1-\boldsymbol \theta_2)^\top\frac{H_f(\boldsymbol \theta_2)}{2}(\boldsymbol \theta_1-\boldsymbol \theta_2)  \\
  &= f({\boldsymbol \theta}_2)+\langle\nabla_{\boldsymbol \theta} f(\boldsymbol \theta_2),\boldsymbol \theta_1-\boldsymbol \theta_2\rangle+(\boldsymbol \theta_1-\boldsymbol \theta_2)^\top \frac{2\bm A^\top \bm A}{2}(\boldsymbol \theta_1-\boldsymbol \theta_2) \\
  &= f({\boldsymbol \theta}_2)+\langle\nabla_{\boldsymbol \theta} f(\boldsymbol \theta_2),\boldsymbol \theta_1-\boldsymbol \theta_2\rangle+(\boldsymbol \theta_1-\boldsymbol \theta_2)^\top\frac{2\bm C}{2}(\boldsymbol \theta_1-\boldsymbol \theta_2) .
\end{align}
Therefore, one can simply use the smallest singular value of $2\bm C$, i.e., $2\lambda_{\min} (\bm C) $ as $\alpha$, and largest singular value of $2\bm C$, i.e., $2\lambda_{\max} (\bm C) $ as $\beta$. 
In other words, our objective \eqref{eq:compressive_sensing} satisfies $(2\lambda_{\min}(\bm C), k)$-RSC and $(2\lambda_{\max}(\bm C), k)$-RSM, and AWP pruning inherits above convergence and recovery guarantees with $\kappa=\lambda_{\max}(\bm C) /\lambda_{\min}(\bm C) $, i.e., the condition number of $\bm C$.

\begin{remark}
Note that, in \cref{def:RSC} and \cref{def:RSM}, $ \ \|{\boldsymbol \theta}_1\|_0 \leq k $ and $\|{\boldsymbol \theta}_2\|_0 \leq k$, therefore $\|{\boldsymbol \theta}_1-{\boldsymbol \theta}_2\|_0 \leq \min(2k,d_\mathrm{in})$, which is still a sparse vector when $k<d_\mathrm{in}/2$. So $2\lambda_{\min}(\bm C)$ and $2\lambda_{\max}(\bm C)$ are usually loose lower and upper bounds for $\alpha$ and $\beta$, respectively, and $\lambda_{\max}(\bm C) /\lambda_{\min}(\bm C)$ is usually a loose upper bound for $\kappa$. 
One can see that, the smaller the $\kappa$, the better the convergence guarantee and recovery guarantee.
\end{remark}

\section{Activation-Aware Loss Derivation}

The expression for the activation-aware loss in \cref{eq:activation_aware_C} is derived as follows:
\begin{align*}
\| \mathbf{W}\mathbf{X}-\mathbf{W}_{\text{sparse}}\mathbf{X} \|_\mathrm{F}^2
&= \mathrm{tr}\big[ 
(\mathbf W-\mathbf{W}_{\text{sparse}}) (\mathbf X \mathbf X^\top) (\mathbf W-\mathbf{W}_{\text{sparse}})^\top
\big] \\
&= \mathrm{tr}\big[ 
(\mathbf W-\mathbf{W}_{\text{sparse}}) (\mathbf{XX}^\top)^{1/2}(\mathbf{XX}^\top)^{1/2} (\mathbf W-\mathbf{W}_{\text{sparse}})^\top
\big] \\
&= \| (\mathbf W-\mathbf{W}_{\text{sparse}}) (\mathbf{XX}^\top)^{1/2}\|_\mathrm{F}^2\\   
&= \| \mathbf W (\mathbf{XX}^\top)^{1/2}-\mathbf{W}_{\text{sparse}}(\mathbf{XX}^\top)^{1/2}\|_\mathrm{F}^2.
\end{align*}

\section{Activation-aware Loss w.r.t. Iterations}
\cref{fig:iterations} shows an example of normalized activation-aware loss \eqref{eq:objective_activation_aware}, i.e., $\| \mathbf{W}\mathbf{C}^{\frac{1}{2}} -\mathbf{\Theta}^{(t)}\mathbf{C}^{\frac{1}{2}}\|_\mathrm{F}/\| \mathbf{W}\|_\mathrm{F}$, w.r.t. iteration $t$ during AWP pruning of a layer in the Llama-2 7B model. We can see that such activation-aware approximation loss is effectively minimized by the proposed activation-aware projected gradient descent method described in \cref{alg}.
\begin{figure}[h!]
    \centering
    \includegraphics[width=0.8\textwidth]{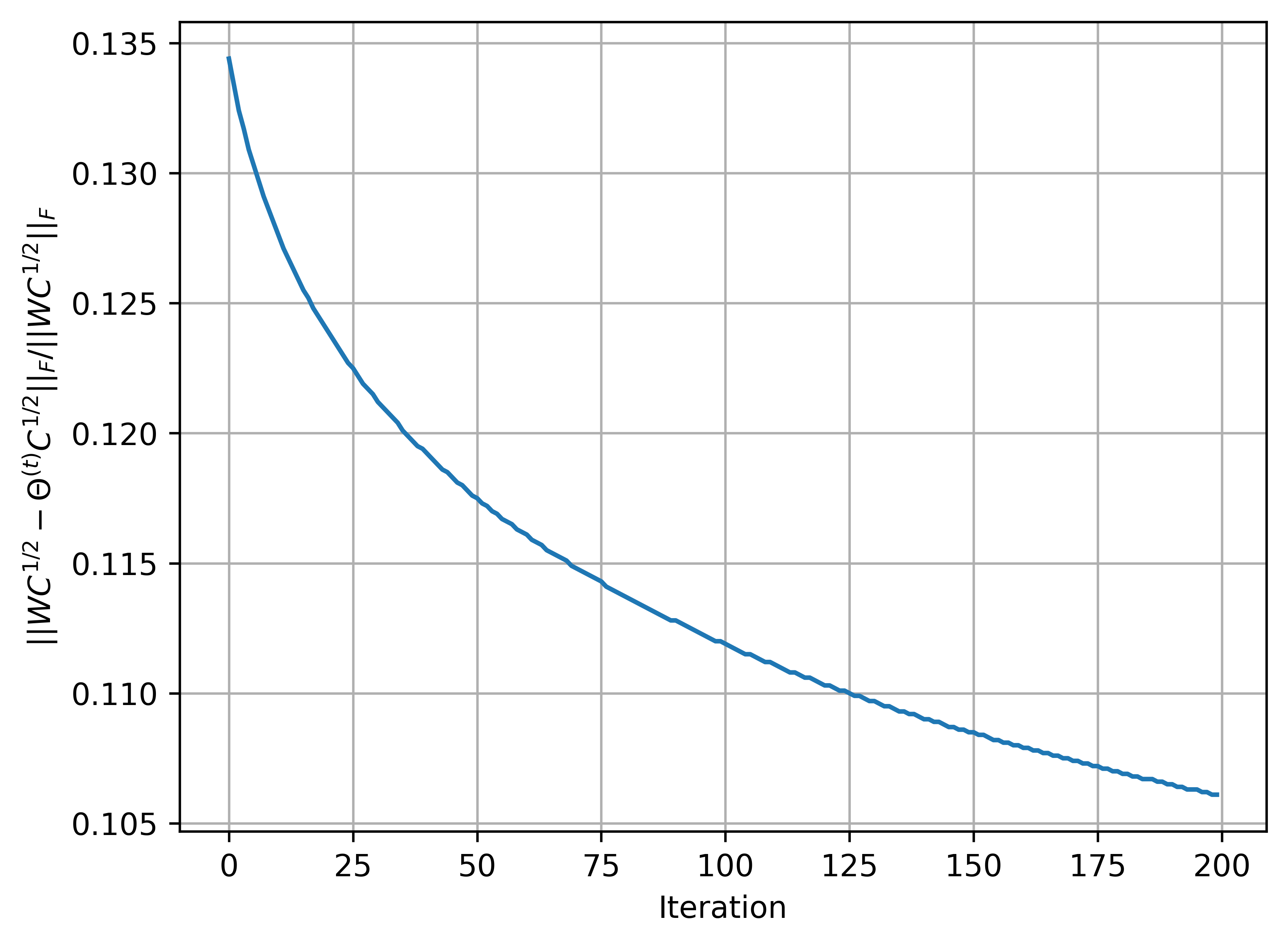} %
    \caption{$\| \mathbf{W}\mathbf{C}^{\frac{1}{2}} -\mathbf{\Theta}^{(t)}\mathbf{C}^{\frac{1}{2}}\|_\mathrm{F}/\| \mathbf{W}\|_\mathrm{F}$, w.r.t. iteration $t$ during AWP pruning of a layer in the Llama-2 7B model.}
    \label{fig:iterations}
\end{figure}
\end{document}